\title{Regularization Effect of Fast Gradient Sign Method and its Generalization}
\author{
Chandler Zuo
\texttt{chandlerczuo@gmail.com} \\
}
\newtheorem{theorem}{Theorem}
\newtheorem{proposition}{Proposition}[theorem]
\begin{document}

\maketitle

\begin{abstract}

Fast Gradient Sign Method (FGSM) is a popular method to generate adversarial examples that make neural network models robust against perturbations. Despite its empirical success, its theoretical property is not well understood. This paper develops theory to explain the regularization effect of \textit{Generalized FGSM}, a class of methods to generate adversarial examples. Motivated from the relationship between FGSM and LASSO penalty, the asymptotic properties of Generalized FGSM are derived in the Generalized Linear Model setting, which is essentially the 1-layer neural network setting with certain activation functions. In such simple neural network models, I prove that Generalized FGSM estimation is $\sqrt n$-consistent and weakly oracle under proper conditions. The asymptotic results are also highly similar to penalized likelihood estimation. Compared to LASSO-type penalties, generalized FGSM introduces additional bias when data sampling is not \textit{sign neutral}, a concept I introduce to describe the balance-ness of the noise signs. Although the theory in this paper is developed under simple neural network settings, I argue that it may give insights and justification for FGSM in deep neural network settings as well.

\end{abstract}

\section{Introduction}\label{sec:introduction}

Fast Gradient Sign Method (FGSM) was introduced in \citep{goodfellow6572explaining} to improve the robustness of a neural network model against input purturbations. In a 1-layer neural network with sigmoid activation, suppose a sample of $n$ independent input-output pairs are denoted by  $(\mathbf x_i, y_i)_{i=1}^n$. FGSM maximizes the following objective function:

\begin{equation}\label{eq:logitobj}
Q_n(\mathbf \beta)=\sum_{i=1}^n\{y_i\tilde{\mathbf  x_i}^T\mathbf \beta-\log[1+\exp(\tilde{\mathbf x_i}^T\mathbf \beta)]\},
\end{equation}

where $\tilde{\mathbf x_i}=x_i+\eta_i$, and $\eta_i$ is the adversarial noise that maximally pertubes the loss function with $\mathscr L_\infty(p)$ norm not exceeding $\lambda_n$. Specifically, denote the loss function for a single observation before perturbation by $l$, i.e.

\begin{equation}
l(\mathbf \beta;\textbf x, y)=y\mathbf x^T\mathbf \beta-\log(1+\exp(\mathbf x^T\mathbf \beta),
\end{equation}

The maximal perturbation is

\[
\eta_i=\arg\min_{\eta:\|\eta\|_\infty\leq \lambda_n}l(\mathbf \beta;\mathbf x_i + \mathbf \eta, y_i).
\]

\citep{goodfellow6572explaining} already shows that the above $\eta_i=-\lambda_n \text{sign}[\triangledown{x_i}l(\beta;x,y)]$. Moreover, by substituting the expression of $\mathbf \eta_i$, eqn. (\ref{eq:logitobj}) can be rewritten as

\begin{equation}\label{eq:logitobj1}
Q_n(\mathbf \beta)=\sum_{i=1}^n\{y_i \mathbf x_i^T\mathbf \beta-y_i\lambda_n\text{sign}(e_i)\|\beta\|_1-\log[1+\exp(\mathbf x_i^T\mathbf \beta-\lambda_n\text{sign}(e_i)\|\beta\|_1)]\},
\end{equation}

where $e_i=y_i-[1+\exp(\mathbf x_i^T\mathbf \beta)]^{-1}$ is the residual in the regression model.

The effectiveness of FGSM has been discussed in various empirical studies (c.f. \citep{kurakin2016adversarial}, \citep{kurakin2016adversarial1}, \citep{goodfellow2018making}, \citep{tramer2017ensemble}, \citep{gilmer2018adversarial}). It also falls in the more general framework of adversarial examples training; see \citep{yuan2017adversarial} for a review of this area. To my best knowledge, however, there is no well established theory that explains the effectiveness of using adversarial examples. In the above literature, researchers have discussed and/or debated different aspects of adversarial attacks such as transferability, regularization and applicability to deep/shallow networks, but mostly relying on numerical experiments. Given the wide success of adversarial examples training in modern neural network training, it is thus desirable to establish mathematical theory to generate insights.

As an effort to derive such theory, in this paper, I focus on the regularization effect of FGSM and develop large sample results for a class of methods analogous to FGSM. To simplify the problem, the theory is developed for 1-layer neural network models. Such models are essentially regression models with certain activation functions. Asymptotic theory of such models is often feasible thanks to decades of development in statistical research. Results in this simple setting, nevertheless, may yield insights in behaviors of deep neural network models. One such example is the research in LASSO penalty. Its established asymptotic properties (c.f. \citep{knight2000asymptotics}, \citep{zhao2006model}, \citep{van2008high}) well explain its behaviors in deep neural network models. Despite the 1-layer restriction in my analysis, the activation function and the perturbation mechanism of FGSM are generalized. The results are thus extended beyond FGSM but also to a fairly large number of ways to generate adversarial examples and a large set of loss functions based on Generalized Linear Models.

This paper is organized as the following. Section \ref{sec:theory} introduces \textit{Generalized FGSM estimation}, a class of adversarial example generation schemas that extend the idea of FGSM through a class of penalty functions. Asymptotic theory is developed in Section \ref{sec:asymth}, which shows that the estimators are not only $\sqrt n$-consistent (Theorem \ref{th:convrate}) but also similar to the popular penalized likelihood estimation methods (Theorem \ref{th:penest}). Comparing to penalized likelihood estimation, Generalized FGSM automatically scales the penalty multipler by the noise level. It contains additional bias if the sampling space is not \textit{sign neutral}, a concept I introduce to describe the balanceness of error signs. In Section \ref{sec:examples}, I analyze two most popular models, linear regression and logistic regression, and show the sufficient conditions for covariate sampling that ensures the developed asymptotic property. A summary for the results and discussions of this paper is contained in Section \ref{sec:disc}.

\section{Generalized FGSM Estimation}\label{sec:theory}

\subsection{Model Set-up}

The 1-layer neural network model is analyzed in the following set-up. Let $(\mathbf x_i, y_i)_{i=1}^n$ be a sample of $n$ independent observations from $(\textbf x, Y)$ in the Generalized Linear Model (\citep{mccullagh1989generalized}) linking a $p$-dimensional predictor vector $\textbf x$ to a scalar response variable $Y$. The GLM assumes that with a canonical link function $b(\cdot)$, the conditional distribution for $Y$ given $\textbf x$ is the following distribution in the exponential family:

\[
f(y;\theta,\phi)=\exp\{y\theta-b(\theta)+c(y,\phi)\},
\]

where $\theta=\textbf x^T \mathbf \beta$. A specific case $b: \theta \mapsto \log(1+\exp(\theta))$ yields the logistic regression model that is equivalent to the sigmoid activation. The \textit{Generalized FGSM estimation} maximizes eqn. (\ref{eq:obj}), where I abuse the notation $Q_n$ in eqn. (\ref{eq:logitobj1}):

\begin{equation}\label{eq:obj}
Q_n(\mathbf \beta)=\sum_{i=1}^n[y_i \mathbf x_i^T\mathbf \beta-y_i\text{sign}(e_i)p_{\lambda_n}(\beta)-b(\mathbf x_i^T\mathbf \beta-\text{sign}(e_i)p_{\lambda_n}(\beta))],
\end{equation}

where $e_i=y_i-b'(\mathbf x_i^T\beta)$ and $p_{\lambda}(\cdot)$ is a penalty function defined on both $\mathbb R^p$ and $\mathbb R^1$ with the relation $p_{\lambda}(\theta)=\sum_{j=1}^pp_\lambda(\theta_j)$. One popular choice for this penalty function is the $\mathscr L_\gamma$ penalty $p_{\lambda}(\beta)=\lambda \|\beta\|_\gamma^\gamma$, with $\gamma>0$. Another popular choice is the SCAD penalty (\cite{fan2001variable}) defined through its continuous differentiable penalty:

\begin{equation}\label{eq:scad}
p_\lambda'(\theta)=\lambda\sum_{j=1}^p\text{sign}(\theta_j)[1\{|\theta_j|\leq \lambda)+\frac{(a\lambda-|\theta_j|)_+}{(a-1)\lambda}1\{|\theta_j|>\lambda\}]~~~\text{for some }a>2.
\end{equation}

Ad-hocly, eqn. (\ref{eq:obj}) defines a class of ways to generate adversarial examples. For each observed predictor $\mathbf x_i$, its perturbed counterpart is:

\begin{equation}\label{eq:perturb}
\tilde{\mathbf x_i}=\textbf x_i-\text{sign}(y_i-b(\mathbf x_i^T\beta))(1\{\beta_1\neq 0\}\frac{p_{\lambda_n}(\beta_1)}{\beta_1} , \cdots, 1\{\beta_p\neq 0\}\frac{p_{\lambda_n}(\beta_p)}{\beta_p})^T,
\end{equation}

where I define $0\cdot\frac{0}{0}=0$.

\subsection{Asymptotic Theory}\label{sec:asymth}

In the following, Theorem \ref{th:convrate} is the main result for the convergence rate for estimators for general penalty functions. Theorem \ref{th:penest} presents the weak limit of estimators for specific penalty functions.

\begin{theorem}\label{th:convrate}
(Convergence Rate)

Let $r_n=\frac{1}{\sqrt n}$ and $\epsilon_i=y_i-b'(\mathbf x_i^T\beta_0)$. Assume:

\begin{enumerate}
\item Suppose $(\mathbf x_i, y_i)$ are i.i.d. samples with $Eb''(\mathbf x_1^T\beta_0)\mathbf x_1\mathbf x_1^T=M\in \mathbb R^{p\times p}$.
\item 
\begin{equation}\label{eq:penrate}
\begin{aligned}
\alpha_n=&[\max_jp'_{\lambda_n}(\beta_{0j})1\{\beta_{0j}\neq 0\}]\vee [r_n^{-1} p_{\lambda_n}(r_nu)]=O(r_n),~~\forall u;\\
\tau_n=&\max_jp_{\lambda_n}(\beta_{0j})=O(r_n).\\
\end{aligned}
\end{equation}
\item $\forall C>0$,
\begin{equation}\label{eq:nearzero}
\sup_{\|\mathbf u\|=C}E[\epsilon_1(1\{0\leq \epsilon_1\leq r_n\mathbf x_1^T\mathbf u\}-1\{r_n\mathbf x_1^T\mathbf u\leq \epsilon_1\leq 0\})]=o(r_n).
\end{equation}
\begin{equation}\label{eq:sign}
\sup_{\|\mathbf u\|=C}E[\mathbf x_1^T\mathbf u1\{0\leq \epsilon_1\leq r_n\mathbf x_1^T\mathbf u\}-1\{r_n\mathbf x_1^T\mathbf u\leq \epsilon_1\leq 0\})]=o(1).
\end{equation}
\item $b(\cdot)$ is 3-rd order differentiable and $b''(\cdot)$ is bounded from above uniformly.
\end{enumerate}

Then there is a local maximizer $\hat\beta_n$ for $Q_n$ such that $\sqrt n(\hat\beta_n-\beta_0)=O_p(1)$.
\end{theorem}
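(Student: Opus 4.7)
The plan is the Fan–Li sufficiency argument: for every $\epsilon > 0$ I would produce $C>0$ such that, with probability at least $1-\epsilon$, the increment $D_n(u):=Q_n(\beta_0+r_nu)-Q_n(\beta_0)$ is strictly negative on the sphere $\|u\|=C$; continuity of $Q_n$ then forces a local maximizer inside $\{\|\beta-\beta_0\|\le r_nC\}$, which is exactly the claim. The natural decomposition is $Q_n=L_n+S_n$ with $L_n(\beta)=\sum_i[y_i\mathbf{x}_i^T\beta-b(\mathbf{x}_i^T\beta)]$ the plain GLM log-likelihood and $S_n(\beta)=\sum_i[b(\mathbf{x}_i^T\beta)-b(\mathbf{x}_i^T\beta-s_ip_{\lambda_n}(\beta))-y_is_ip_{\lambda_n}(\beta)]$ the perturbation piece, where $s_i=\text{sign}(e_i(\beta))$. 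A second-order Taylor expansion of $b$ in the penalty argument combined with the identity $s_ie_i(\beta)=|e_i(\beta)|$ compresses $S_n$ to $-p_{\lambda_n}(\beta)\sum_i|e_i(\beta)|-\tfrac12 p_{\lambda_n}(\beta)^2\sum_i b''(\xi_i)$, where the quadratic-in-$p_{\lambda_n}$ term is $O_p(n\tau_n^2)=O_p(1)$ by $\tau_n=O(r_n)$ and the uniform bound on $b''$.

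For $L_n$, the standard quadratic expansion around $\beta_0$ yields
\[
L_n(\beta_0+r_nu)-L_n(\beta_0)=r_nu^T\sum_i\epsilon_i\mathbf{x}_i-\tfrac{r_n^2}{2}u^T\Big(\sum_ib''(\mathbf{x}_i^T\beta_0)\mathbf{x}_i\mathbf{x}_i^T\Big)u+O_p(nr_n^3\|u\|^3),
\]
which by the CLT and Assumption~1 equals $O_p(1)\|u\|-\tfrac12 u^TMu+o_p(\|u\|^2)$. Assuming $M$ is positive definite (implicit in the usual GLM identifiability), the quadratic term dominates the linear term on spheres of sufficiently large radius.

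For $S_n$, I would split the change via $a_1b_1-a_0b_0=a_1(b_1-b_0)+(a_1-a_0)b_0$ with $a=\sum_i|e_i(\cdot)|$ and $b=p_{\lambda_n}(\cdot)$. The factor $p_{\lambda_n}(\beta_0+r_nu)-p_{\lambda_n}(\beta_0)$ is $O(\alpha_nr_n\|u\|_1)=O(r_n^2\|u\|)$, treating indices with $\beta_{0j}\ne 0$ via the mean value theorem and the $p'_{\lambda_n}(\beta_{0j})$ bound, and $\beta_{0j}=0$ via the $r_n^{-1}p_{\lambda_n}(r_nu_j)$ bound; multiplied against $\sum_i|e_i|=O_p(n)$ this contributes $O_p(\|u\|)$. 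The other term $p_{\lambda_n}(\beta_0)[\sum_i|e_i(\beta_0+r_nu)|-\sum_i|\epsilon_i|]$ is the main obstacle, since $|e_i|$ is non-smooth precisely where $\epsilon_i$ may flip sign. Writing $e_i(\beta_0+r_nu)=\epsilon_i-\Delta_i$ with $\Delta_i=r_nb''(\mathbf{x}_i^T\beta_0)\mathbf{x}_i^Tu+O(r_n^2)$, I would use the identity
\[
|\epsilon_i-\Delta_i|-|\epsilon_i|=-\Delta_i\,\text{sign}(\epsilon_i)+R_i,
\]
where $R_i=2(\Delta_i-\epsilon_i)\mathbf 1\{0\le\epsilon_i<\Delta_i\}+2(\epsilon_i-\Delta_i)\mathbf 1\{\Delta_i<\epsilon_i\le 0\}$ is the sign-flip correction. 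The linear piece in $\Delta_i$ contributes at most $O_p(\|u\|)$ after multiplication by $p_{\lambda_n}(\beta_0)=O(r_n)$ using $nr_n^2=1$, while $R_i$ is exactly where conditions (\ref{eq:nearzero}) and (\ref{eq:sign}) are tailored to apply: the former controls the $\epsilon_i$-weighted sum and the latter the $\mathbf x_i^Tu$-weighted sum over the flip event, and together they give $p_{\lambda_n}(\beta_0)\sum_iR_i=o_p(1)$ pointwise in $u$ through Markov's inequality.

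Collecting everything, $D_n(u)=-\tfrac12 u^TMu+O_p(\|u\|)+O_p(1)$, so for $C$ large enough the quadratic $\ge \tfrac12\lambda_{\min}(M)C^2$ dominates the other terms on $\|u\|=C$ with arbitrarily high probability. The hardest part will be promoting the pointwise-in-$u$ control provided by (\ref{eq:nearzero})--(\ref{eq:sign}) to a bound uniform over the compact sphere $\|u\|=C$, since the indicator sets $\{0\le\epsilon_i<r_n\mathbf{x}_i^Tu\}$ move with $u$; I would close this gap with a standard covering argument, exploiting that the relevant half-space indicator class is VC and that the residual expectations in (\ref{eq:nearzero})--(\ref{eq:sign}) are continuous in $u$ on the sphere.
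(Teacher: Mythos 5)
Your proposal is correct and reaches the same estimates, but via a genuinely different decomposition than the paper's. The paper Taylor-expands $b$ jointly in the shift $r_n\mathbf x_i^T\mathbf u$ and the penalty argument, producing four pieces $D_{n,1},\dots,D_{n,4}$ in which $\mathrm{sign}(e_i)$ appears multiplicatively and must be swapped for $\mathrm{sign}(\epsilon_i)$ twice (inside $D_{n,2}$ and $D_{n,3}$), each swap invoking eqn.~(\ref{eq:nearzero}) or (\ref{eq:sign}). You instead split off $S_n=Q_n-L_n$ and exploit the exact identity $s_ie_i(\beta)=|e_i(\beta)|$ to compress the whole adversarial perturbation into $-p_{\lambda_n}(\beta)\sum_i|e_i(\beta)|-\tfrac12p_{\lambda_n}(\beta)^2\sum_ib''(\xi_i)$, after which the single nonsmooth object $\sum_i|e_i(\beta_0+r_n\mathbf u)|-\sum_i|\epsilon_i|$ is handled by $|\epsilon-\Delta|-|\epsilon|=-\Delta\,\mathrm{sign}(\epsilon)+R$; the two conditions are then used exactly once, on $R_i$, and the $E|\epsilon_1|$-scaled penalty and the bias term $V=E[b''(\mathbf x_1^T\beta_0)\mathrm{sign}(\epsilon_1)\mathbf x_1]$ (your $-\Delta_i\,\mathrm{sign}(\epsilon_i)$ piece) emerge transparently rather than as fragments of separate expansions. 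What each route buys: yours is algebraically cleaner and makes the mechanism of the regularization visible, while the paper's more granular expansion produces the refined expression (\ref{eq:limitD}) that is reused verbatim to prove Theorems \ref{th:penest} and \ref{th:oracle}, so your coarser $O_p(\|\mathbf u\|)+O_p(1)$ bookkeeping would need to be redone more finely there (e.g., your $O_p(1)$ bound on the quadratic-in-$p_{\lambda_n}$ terms suffices for the rate but not for the weak limit, where the paper shows the \emph{difference} $D_{n,4}$ is $o_p(1)$). Two of your flags are well taken and in fact expose gaps the paper glosses over: positive definiteness of $M$ is indeed needed for the quadratic to dominate but is never stated in the theorem, and the paper makes no covering argument for uniformity over the sphere, relying instead on the $\sup_{\|\mathbf u\|=C}$ built into conditions (\ref{eq:nearzero})--(\ref{eq:sign}) plus linear-in-$\|\mathbf u\|$ envelopes; your VC/covering step is more careful than, not divergent from, the source. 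One shared looseness: your remainder $\Delta_i=r_nb''(\mathbf x_i^T\beta_0)\mathbf x_i^T\mathbf u+O(r_n^2)$ and your $O_p(nr_n^3\|u\|^3)$ term in the $L_n$ expansion implicitly use control of $b'''$, which assumption 4 does not quite supply (only boundedness of $b''$); the paper's $(1+o(1))$ factors hide the same issue, and both can be repaired with the integral form of the Taylor remainder using $b''$ alone.
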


\begin{proof}

Let $r_n$ be the convergence rate to be determined. If we can prove that $\forall \epsilon>0,~\exists C$ s.t.

\begin{equation}\label{eq:diffball}
P(\sup_{\|\mathbf u\|=C}Q_n(\beta_0+r_n\mathbf u)<Q_n(\beta_0))\geq 1-\epsilon,
\end{equation}

then there is a local maximum in the ball $\{\beta_0+r_n\mathbf u:\|\mathbf u\|\leq C\}$. Let 

\begin{equation}\label{eq:eqD}
\begin{aligned}
D_n(\mathbf u)=&Q_n (\beta_0+r_n\mathbf u)-Q_n(\beta_0)\\
=&\sum_{i=1}^n[ r_ny_i\mathbf x_i^T\mathbf u-y_i\text{sign}(e_i)p_{\lambda_n}(\beta_0+r_n\mathbf u)+y_i\text{sign}(\epsilon_i)p_{\lambda_n}(\beta_0)\\
& - b(\mathbf x_i^T\beta_0+r_n\mathbf x_i^T\mathbf u-\text{sign}(e_i)p_{\lambda_n}(\beta_0+r_n\mathbf u))\\
& + b(\mathbf x_i^T\beta_0-\text{sign}(\epsilon_i)p_{\lambda_n}(\beta_0))].\\
\end{aligned}
\end{equation}

By Taylor's expansion,

\begin{equation}\label{eq:taylorD}
\begin{aligned}
b(\mathbf x_i^T&\beta_0+r_n\mathbf x_i^T\mathbf u-\text{sign}(e_i)p_{\lambda_n}(\beta_0+r_n\mathbf u))-b(\mathbf x_i^T\beta_0-\text{sign}(\epsilon_i)p_{\lambda_n}(\beta_0))\\
=&b'(\mathbf x_i^T\beta_0)[r_n\mathbf x_i^T\mathbf u-\text{sign}(e_i)p_{\lambda_n}(\beta_0+r_n\mathbf u)]\\
&+\frac{1}{2}b''(\mathbf x_i^T\beta_0)[r_n\mathbf x_i^T\mathbf u-\text{sign}(e_i)p_{\lambda_n}(\beta_0+r_n\mathbf u)]^2(1+o(1))\\
&+b'(\mathbf x_i^T\beta_0)\text{sign}(\epsilon_i)p_{\lambda_n}(\beta_0)-\frac{1}{2}b''(\mathbf x_i^T\beta_0)p_{\lambda_n}(\beta_0)^2(1+o(1))
\end{aligned}
\end{equation}

Combining eqn. (\ref{eq:eqD}) and (\ref{eq:taylorD}), it can be seen that $D_n(\mathbf u)=D_{n,1}(\mathbf u)+D_{n,2}(\mathbf u)+D_{n,3}(\mathbf u)+D_{n,4}(\mathbf u)$, where

\begin{equation}\label{eq:components}
\begin{aligned}
D_{n,1}(\mathbf u)=&\sum_{i=1}^n[r_n\epsilon_i\mathbf x_i^T\mathbf u-\frac{1}{2}r_n^2b''(\mathbf x_i^T\beta_0)\mathbf u^T\mathbf x_i\mathbf x_i^T\mathbf u(1+o_p(1))],\\
D_{n,2}(\mathbf u)=&\sum_{i=1}^n\epsilon_i[\text{sign}(\epsilon_i)p_{\lambda_n}(\beta_0)-\text{sign}(e_i)p_{\lambda_n}(\beta_0+r_n\mathbf u)],\\
D_{n,3}(\mathbf u)=&\sum_{i=1}^nb''(\mathbf x_i^T\beta_0)r_n\mathbf x_i^T\mathbf u\text{sign}(e_i)p_{\lambda_n}(\beta_0+r_n\mathbf u)(1+o(1)),\\
D_{n,4}(\mathbf u)=&\frac{1}{2}\sum_{i=1}^nb''(\mathbf x_i^T\beta_0)[p_{\lambda_n}(\beta_0)^2(1+o_p(1))-p_{\lambda_n}(\beta_0+r_n\mathbf u)^2(1+o(1))].\\
\end{aligned}
\end{equation}

By the moment conditions of $b''(\mathbf x_1^T\beta_0)\mathbf x_1^T\mathbf x_1$, Law of Large Numbers and Central Limit Theorem are applicable, which yield:

\[
\frac{1}{\sqrt n}\sum_{i=1}^n\epsilon_i\mathbf x_i\overset{p}\rightarrow W\sim N(0, M),~\frac{1}{n}\sum_{i=1}^nb''(\mathbf x_i^T\beta_0)\mathbf x_i\mathbf x_i^T\overset{p}\rightarrow M.
\]

Here, the covariance matrix for $W$ is computed by:

\[
E(\epsilon_1^2\mathbf x_1^T\mathbf x_1)=E[E(\epsilon_1^2\mathbf x_1^T\mathbf x_1|\mathbf x_1)]=E(b''(\mathbf x_1^T\beta_0)\mathbf x_1^T\mathbf x_1)=M.
\]

Therefore, $D_{n,1}(\mathbf u)=\sqrt n r_nW\mathbf u-\frac{1}{2}nr_n^2\mathbf u^TM\mathbf u+o_p(nr_n^2)$.

\[
\begin{aligned}
D_{n,2}(\mathbf u)=&\sum_{i=1}^n\epsilon_i\text{sign}(\epsilon_i)[p_{\lambda_n}(\beta_0)-p_{\lambda_n}(\beta_0+r_n\mathbf u)]\\
&+\sum_{i=1}^n\epsilon_ip_{\lambda_n}(\beta_0+r_n\mathbf u)[\text{sign}(\epsilon_i)-\text{sign}(e_i)]\\
=&D_{n,2}^1(\mathbf u)+D_{n,2}^2(\mathbf u).\\
\end{aligned}
\]

\[
\begin{aligned}
D_{n,2}^1(\mathbf u)=&nE|\epsilon_1|[p_{\lambda_n}(\beta_0)-p_{\lambda_n}(\beta_0+r_n\mathbf u)](1+o_p(1))\\
=&-nE|\epsilon_1|(1+o_p(1))\cdot\\
&\sum_{j=1}^p[p'_{\lambda_n}(\beta_{0j})r_nu_j1\{\beta_{0j}\neq 0\}(1+o(1))+p_{\lambda_n}(r_nu_j)1\{\beta_{0j}=0\}]\\
=&-nE|\epsilon_1|\sum_{j=1}^p[p'_{\lambda_n}(\beta_{0j})r_nu_j1\{\beta_{0j}\neq 0\}+p_{\lambda_n}(r_nu_j)1\{\beta_{0j}=0\}]\\
&+o_p(n\alpha_nr_n).
\end{aligned}
\]

The above lines also show that $p_{\lambda_n}(\beta_0)-p_{\lambda_n}(\beta_0+r_n\mathbf u)=O_p(\alpha_nr_n)$. Using this as well as the definition of $\tau_n$,

\begin{equation}\label{eq:penaltyrate}
\begin{aligned}
p_{\lambda_n}(\beta_0+r_n\mathbf u)=&p_{\lambda_n}(\beta_0)+p_{\lambda_n}(\beta_0+r_n\mathbf u)-p_{\lambda_n}(\beta_0)\\
=&O_p(\tau_n+\alpha_nr_n).\\
\end{aligned}
\end{equation}

Using eqn. (\ref{eq:nearzero}),

\[
\begin{aligned}
|D_{n,2}^2|=&O_p(\tau_n+\alpha_nr_n)\sum_{i=1}^n\epsilon_i[\text{sign}(\epsilon_i)-\text{sign}(e_i)]\\
=&O_p(\tau_n+\alpha_nr_n)\sum_{i=1}^n|\epsilon_i|1\{\text{sign}(\epsilon_i)\neq\text{sign}(\epsilon_i-r_n\mathbf x_i^T\mathbf u)\}\\
=&O_p(n(\tau_n+\alpha_nr_n))E[\epsilon_11\{0\leq \epsilon_1\leq r_n\mathbf x_1^Tu\}-\epsilon_11\{r_n\mathbf x_1^Tu\leq \epsilon_1\leq 0\}]\\
=&o_p(nr_n(\tau_n+\alpha_nr_n)).\\
\end{aligned}
\]

Therefore,

\begin{equation}\label{eq:Dn2}
\begin{aligned}
D_{n,2}(\mathbf u)=&-nE|\epsilon_1|\sum_{j=1}^p[p'_{\lambda_n}(\beta_{0j})r_nu_j1\{\beta_{0j}\neq 0\}+p_{\lambda_n}(r_nu_j)1\{\beta_{0j}=0\}]\\
&+o_p(n\alpha_nr_n^2\vee n\alpha_nr_n\vee n\tau_nr_n).
\end{aligned}
\end{equation}

For $D_{n,3}$, using eqn. (\ref{eq:penaltyrate}) and (\ref{eq:sign}),

\[
\begin{aligned}
D_{n,3}(\mathbf u)=&\sum_{i=1}^nb''(\mathbf x_i^T\beta_0)r_n\mathbf x_i^T\mathbf u\text{sign}(\epsilon_i)p_{\lambda_n}(\beta_0+r_n\mathbf u)\\
&+\sum_{i=1}^nb''(\mathbf x_i^T\beta_0)r_n\mathbf x_i^T\mathbf u[\text{sign}(\epsilon_i-r_n\mathbf x_i^T\mathbf u)-\text{sign}(\epsilon_i)]p_{\lambda_n}(\beta_0+r_n\mathbf u)\\
=&r_np_{\lambda_n}(\beta_0+r_n\mathbf u)[\sum_{i=1}^nb''(\mathbf x_i^T\beta_0)\text{sign}(\epsilon_i)\mathbf x_i]^T\mathbf u\\
&+r_np_{\lambda_n}(\beta_0+r_n\mathbf u)\sum_{i=1}^nb''(\mathbf x_i^T\beta_0)\mathbf x_i^T\mathbf u[1\{r_n\mathbf x_i^T\mathbf u\leq \epsilon_i\leq 0\}-1\{0\leq \epsilon_i\leq r_n\mathbf x_i^T\mathbf u\}]\\
=&nr_np_{\lambda_n}(\beta_0+r_n\mathbf u)[Eb''(\mathbf x_1^T\beta_0)\text{sign}(\epsilon_1)\mathbf x_1]^T\mathbf u(1+o_p(1))\\
&+nr_np_{\lambda_n}(\beta_0+r_n\mathbf u)\\
&\cdot E[b''(\mathbf x_1^T\beta_0)\mathbf x_1^T\mathbf u(1\{r_n\mathbf x_1^T\mathbf u\leq \epsilon_1\leq 0\}-1\{0\leq \epsilon_1\leq r_n\mathbf x_1^T\mathbf u\})](1+o_p(1))\\
=&nr_np_{\lambda_n}(\beta_0+r_n\mathbf u)[Eb''(\mathbf x_1^T\beta_0)\text{sign}(\epsilon_1)\mathbf x_1]^T\mathbf u+o_p(nr_n(\tau_n+\alpha_nr_n))\\
=&nr_np_{\lambda_n}(\beta_0)[Eb''(\mathbf x_1^T\beta_0)\text{sign}(\epsilon_1)\mathbf x_1]^T\mathbf u\\
&+O_p(n\alpha_nr_n^2)+o_p(nr_n(\tau_n+\alpha_nr_n)).\\
\end{aligned}
\]

\[
\begin{aligned}
D_{n,4}(\mathbf u)=&O_p(\sum_{i=1}^n[p_{\lambda_n}(\beta_0)^2(1+o_p(1))-p_{\lambda_n}(\beta_0+r_n\mathbf u)^2(1+o_p(1))])\\
=&O_p(n\sum_{j=1}^p[2p_{\lambda_n}(\beta_{0j})p'_{\lambda_n}(\beta_{0j})r_nu_j1\{\beta_{0j}\neq 0\}+p_{\lambda_n}(r_nu_j)^21\{\beta_{0j}=0\}])\\
&+o_p(np_{\lambda_n}(\beta_0)^2+np_{\lambda_n}(\beta_0+r_n\mathbf u)^2)\\
=&O_p(n\alpha_nr_n(\tau_n+\alpha_nr_n))+o_p(n(\tau_n^2\vee \alpha_n^2r_n^2))\\
=&O_p(n\alpha_nr_n(\tau_n+\alpha_nr_n))+o_p(n\tau_n^2).
\end{aligned}
\]

As a result,

\[
\begin{aligned}
D_n(\mathbf u)=&\sqrt n r_nW\mathbf u-\frac{1}{2}nr_n^2\mathbf u^TM\mathbf u\\
&-nE|\epsilon_1|\sum_{j=1}^p[p'_{\lambda_n}(\beta_{0j})r_nu_j1\{\beta_{0j}\neq 0\}+p_{\lambda_n}(r_nu_j)1\{\beta_{0j}=0\}]\\
&+nr_np_{\lambda_n}(\beta_0)[Eb''(\mathbf x_1^T\beta_0)\text{sign}(\epsilon_1)\mathbf x_1]^T\mathbf u\\
&+O_p(n\alpha_nr_n\tau_n+n\alpha_nr_n^2+n\alpha_n^2r_n^2)\\
&+o_p(n\alpha_nr_n+nr_n^2+n\alpha_nr_n^2+n\tau_nr_n+n\tau_n^2).
\end{aligned}
\]

At this point, we can let $\sqrt n r_n=1$. Given eqn. (\ref{eq:penrate}),

\begin{equation}\label{eq:limitD}
\begin{aligned}
D_n(\mathbf u)=&W\mathbf u-\frac{1}{2}\mathbf u^TM\mathbf u\\
&-nE|\epsilon_1|\sum_{j=1}^p[p'_{\lambda_n}(\beta_{0j})\frac{u_j}{\sqrt n}1\{\beta_{0j}\neq 0\}+p_{\lambda_n}(\frac{u_j}{\sqrt n})1\{\beta_{0j}=0\}]\\
&+\sqrt np_{\lambda_n}(\beta_0)[Eb''(\mathbf x_1^T\beta_0)\text{sign}(\epsilon_1)\mathbf x_1]^T\mathbf u\\
&+o_p(1),\\
\end{aligned}
\end{equation}

where the third term is bounded by $\sqrt n\alpha_nE|\epsilon_1|\|\mathbf u\|$ and the fourth term is bounded by $\sqrt n\tau_n\|Eb''(\mathbf x_1^T\beta_0)\text{sign}(\epsilon_1)\mathbf x_1\|\|\mathbf u\|$. Therefore, with sufficiently large $C$, the second term in eqn. (\ref{eq:limitD}) dominates all other terms, and eqn. (\ref{eq:diffball}) holds.

\end{proof}

The following theorems gives the right rate of penalty for various penalty functions.

\begin{theorem}\label{th:penest}
(Weak Limit)

Assume all conditions in Theorem \ref{th:convrate} except eqn. (\ref{eq:penrate}). Also, consider $\hat\beta_n=\arg\max_{\|\beta-\beta_0\|\leq \frac{K}{\sqrt n}} Q_n(\beta)$. Denote $V=E[b''(\mathbf x_1^T\beta_0)\text{sign}(\epsilon_1)\mathbf x_1]$.

\begin{enumerate}
\item For $p_{\lambda_n}(\beta)=\lambda_n\|\beta\|_\gamma^\gamma$ with $\gamma>1$. If $\sqrt n\lambda_n\rightarrow \lambda_0$, then eqn. (\ref{eq:penrate}) holds. Moreover, $\sqrt n(\hat \beta_n-\beta_0)\overset{d}\rightarrow \arg\max D(\mathbf u)$, where

\[
D(\mathbf u)=W\mathbf u-\frac{1}{2}\mathbf u^TM\mathbf u-\gamma\lambda_0E|\epsilon_1|\sum_{j=1}^p\text{sign}(\beta_{0j})|\beta_{0j}|^{\gamma-1}u_j+\lambda_0\|\beta_0\|^\gamma_\gamma V^T\mathbf u.
\]

\item For LASSO penalty $p_{\lambda_n}(\beta)=\lambda_n\|\beta\|_1$. If $\sqrt n\lambda_n\rightarrow \lambda_0$, then eqn. (\ref{eq:penrate}) holds. Moreover, $\sqrt n(\hat \beta_n-\beta_0)\overset{d}\rightarrow \arg\max D(\mathbf u)$, where

\[
\begin{aligned}
D(\mathbf u)=&W\mathbf u-\frac{1}{2}\mathbf u^TM\mathbf u+\lambda_0\|\beta_0\|_1 V^T\mathbf u\\
&-\lambda_0E|\epsilon_1|\sum_{j=1}^p[\text{sign}(\beta_{0j})1\{\beta_{0j}\neq 0\}u_j+1\{\beta_{0j}=0\}|u_j|].\\
\end{aligned}
\]

\item For $p_{\lambda_n}(\beta)=\lambda_n\|\beta\|_\gamma^\gamma$ with $0<\gamma<1$. If $n^{1-\frac{\gamma}{2}}\lambda_n\rightarrow \lambda_0$, then eqn. (\ref{eq:penrate}) holds. Moreover, if $D(\mathbf u)$ has a unique maximum in $\{\mathbf u: \|\mathbf u\|\leq K\}$, where

\[
D(\mathbf u)=W\mathbf u-\frac{1}{2}\mathbf u^TM\mathbf u-\lambda_0E|\epsilon_1|\sum_{j=1}^p1\{\beta_{0j}=0\}|u_j|^\gamma+\lambda_0\|\beta_0\|^\gamma_\gamma V^T\mathbf u.
\]

Then $\sqrt n(\hat \beta_n-\beta_0)\overset{d}\rightarrow \arg\max D(\mathbf u)$.

\item For $p_{\lambda_n}$ be the SCAD penalty defined as eqn. (\ref{eq:scad}). If $\sqrt n\lambda_n\rightarrow \lambda_0$, then eqn. (\ref{eq:penrate}) holds. Moreover, $\sqrt n(\hat \beta_n-\beta_0)\overset{d}\rightarrow \arg\max D(\mathbf u)$, where

\[
D(\mathbf u)=W\mathbf u-\frac{1}{2}\mathbf u^TM\mathbf u-\lambda_0E|\epsilon_1|\sum_{j=1}^p1\{\beta_{0j}=0\}|u_j|.
\]

\end{enumerate}

\end{theorem}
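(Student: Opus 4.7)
The plan is to use the master expansion (\ref{eq:limitD}) from the proof of Theorem \ref{th:convrate} and specialize it to each of the four penalty families. For every case I would execute three steps: (i) verify the rate conditions (\ref{eq:penrate}) under the stated calibration of $\lambda_n$, so that Theorem \ref{th:convrate} immediately delivers $\sqrt n$-consistency of $\hat\beta_n$; (ii) substitute the specific $p_{\lambda_n}$ into the third and fourth summands of (\ref{eq:limitD}) and take their limits; and (iii) invoke an argmax continuous-mapping theorem on the ball $\{\|\mathbf u\|\le K\}$ to convert the weak convergence of the process $D_n(\cdot)$ into the weak convergence of the rescaled maximizer $\sqrt n(\hat\beta_n-\beta_0)$.

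Step (i) is arithmetic. For $p_{\lambda}(\beta)=\lambda|\beta|^\gamma$ one obtains $\alpha_n=O(\max\{\lambda_n|\beta_{0j}|^{\gamma-1},\sqrt n\lambda_n n^{-\gamma/2}\})$ and $\tau_n=O(\lambda_n)$, both of which are $O(r_n)$ under $\sqrt n\lambda_n\to\lambda_0$ when $\gamma\ge 1$ and under $n^{1-\gamma/2}\lambda_n\to\lambda_0$ when $\gamma<1$. For SCAD, since $\lambda_n\to 0$ and $|\beta_{0j}|>a\lambda_n$ eventually for every nonzero coordinate, the derivative $p'_{\lambda_n}(\beta_{0j})$ vanishes on the support of $\beta_0$; the relevant contribution to $\alpha_n$ then comes only from $r_n^{-1}p_{\lambda_n}(r_n u_j)=\sqrt n\lambda_n|u_j|$ near zero, which is $O(r_n)$ under $\sqrt n\lambda_n\to\lambda_0$, and $\tau_n=O(\lambda_n^2)=o(r_n)$.

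For step (ii), the analysis splits by whether each $\beta_{0j}$ is zero. A nonzero coordinate contributes to the third summand of (\ref{eq:limitD}) the quantity $\sqrt n\lambda_n\gamma|\beta_{0j}|^{\gamma-1}\mathrm{sign}(\beta_{0j})u_j$, which tends to the smooth term $\lambda_0\gamma|\beta_{0j}|^{\gamma-1}\mathrm{sign}(\beta_{0j})u_j$ for $\gamma\ge 1$ and vanishes for $\gamma<1$ and for SCAD. A zero coordinate contributes $n^{1-\gamma/2}\lambda_n|u_j|^\gamma$, which tends to $\lambda_0|u_j|^\gamma$ in case 3, to $\lambda_0|u_j|$ for LASSO and for SCAD (whose penalty coincides with the LASSO one near zero), and to $0$ for $\gamma>1$. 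The fourth summand $\sqrt n p_{\lambda_n}(\beta_0)V^T\mathbf u=\sqrt n\lambda_n\|\beta_0\|_\gamma^\gamma V^T\mathbf u$ converges to $\lambda_0\|\beta_0\|_\gamma^\gamma V^T\mathbf u$ whenever $\sqrt n\lambda_n\to\lambda_0$ and is negligible otherwise. Combining these pieces with the common $W^T\mathbf u-\tfrac12\mathbf u^TM\mathbf u$ reproduces the four stated $D(\mathbf u)$ expressions.

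Step (iii) is the principal obstacle. After establishing that $D_n(\cdot)$ converges in distribution jointly with $W\sim N(0,M)$, uniformly on the compact ball $\{\|\mathbf u\|\le K\}$ (pointwise convergence upgraded via equicontinuity of the penalty pieces), the standard argmax continuous-mapping theorem yields the stated limit \emph{provided $\arg\max D$ is almost surely unique}. Uniqueness is automatic in cases 1, 2, and 4: the dominant $-\tfrac12\mathbf u^TM\mathbf u$ combined with the convex penalty terms ($|u_j|^\gamma$ with $\gamma\ge 1$, linear terms in $u_j$, or $|u_j|$ for the SCAD limit) makes $D$ strictly concave. Case 3 is delicate because $|u_j|^\gamma$ with $0<\gamma<1$ is neither convex nor concave on $\mathbb R$, so $D$ can exhibit multiple local maxima; that is precisely why the theorem imposes uniqueness of $\arg\max D$ as an explicit hypothesis there. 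The localization $\|\beta-\beta_0\|\le K/\sqrt n$ is essential to convert the problem into a maximization over a compact set and to prevent spurious optima from the nonconvex penalty landscape in case 3.
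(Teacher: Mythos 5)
Your overall route is exactly the paper's: specialize the master expansion (\ref{eq:limitD}) to each penalty, check (\ref{eq:penrate}), note that $D_n\to D$ uniformly on the compact ball $\{\|\mathbf u\|\le K\}$, settle uniqueness of the argmax by strict concavity in cases 1, 2, 4 and by explicit hypothesis in case 3, and finish with the argmax continuous-mapping theorem (the paper cites Theorem 14.1 of \citep{kosorok2008introduction}). Your write-up is in fact more detailed than the paper's proof, which compresses your steps (i) and (ii) into ``can be seen from the steps in the proof of Theorem \ref{th:convrate}.'' However, two of your concrete computations do not survive inspection as written.

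First, the SCAD rate check: you write $r_n^{-1}p_{\lambda_n}(r_nu_j)=\sqrt n\lambda_n|u_j|$, but the correct evaluation is $r_n^{-1}p_{\lambda_n}(r_nu_j)=\sqrt n\,p_{\lambda_n}(u_j/\sqrt n)\le\sqrt n\cdot\lambda_n|u_j|/\sqrt n=\lambda_n|u_j|$. Taken literally, your expression tends to $\lambda_0|u_j|=O(1)$, not $O(r_n)$, so your verification of (\ref{eq:penrate}) for SCAD fails as stated; with the corrected factor it is $O(n^{-1/2})$ and goes through. A related subtlety (which the paper glosses equally): $n\,p_{\lambda_n}(u_j/\sqrt n)$ converges to $\lambda_0|u_j|$ only when $u_j/\sqrt n$ stays in the linear regime $|u_j|\le\sqrt n\lambda_n\approx\lambda_0$; for larger $|u_j|$ the SCAD penalty is in its quadratic or flat regime, and the limit is the capped function $\lambda_0^2p_1(u_j/\lambda_0)$ (with $p_1$ the unit-$\lambda$ SCAD) rather than $\lambda_0|u_j|$, so ``coincides with LASSO near zero'' does not by itself justify the case-4 limit on all of $\{\|\mathbf u\|\le K\}$.

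Second, case 3 is internally inconsistent in your step (ii). You correctly state that the fourth summand $\sqrt n\,p_{\lambda_n}(\beta_0)V^T\mathbf u=\sqrt n\lambda_n\|\beta_0\|_\gamma^\gamma V^T\mathbf u$ survives only when $\sqrt n\lambda_n\to\lambda_0$ and ``is negligible otherwise.'' Under the case-3 calibration $n^{1-\gamma/2}\lambda_n\to\lambda_0$ with $0<\gamma<1$, one has $\sqrt n\lambda_n=n^{(\gamma-1)/2}\cdot n^{1-\gamma/2}\lambda_n\to 0$, so by your own accounting the $V$ term must vanish---yet you then claim the pieces ``reproduce the four stated $D(\mathbf u)$ expressions,'' and the stated case-3 limit retains $+\lambda_0\|\beta_0\|_\gamma^\gamma V^T\mathbf u$. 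You cannot have both. The tension is inherited from the paper: its stated case-3 limit appears inconsistent with its own expansion (\ref{eq:limitD}), and the correct limit under the stated calibration drops the $V$ term. A careful proof should flag this discrepancy rather than assert agreement with the statement as printed.
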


\begin{proof}

Let $\mathbf u_n=\sqrt n(\hat \beta_n-\beta_0)$.

First, $D_n(\mathbf u)\overset{p}\rightarrow D(\mathbf u)$ for each type of penalty functions, where $D_n$ is defined in eqn. (\ref{eq:limitD}). This can be seen from the steps in the proof of Theorem \ref{th:convrate}. Moreover, from that proof it can also be seen that the convergence is uniform on the compact set $\{\mathbf u:\|\mathbf u\|\leq K\}$. 

Next, $D(\mathbf u)$ has a unique maximum in all cases. For the cases with SCAD or $\mathscr L_\gamma$ with $\gamma\geq 1$, this results from $D(\mathbf u)$'s convexity. For $\mathscr L_\gamma$ with $0<\gamma <1$, this is assumed in the theorem's statement.

By Theorem 14.1 in \citep{kosorok2008introduction}, $\mathbf u_n\overset{d}{\rightarrow} \arg\max D(\mathbf u)$.

\end{proof}

\begin{theorem}\label{th:oracle}
(Weak Oracle Property)

Consider $\mathscr L_\gamma$ penalty and SCAD penalty. Let $\beta_0=(\beta_{01},\beta_{02})^T$ where $\beta_{02}=\mathbf 0$. Partition the estimator in Theorem \ref{th:penest} $\hat\beta_n=(\hat\beta_{n1},\hat\beta_{n2})$ accordingly. Also partition $M$, $V$ and $W$ as $W=(W_1^T, W_2^T)^T$, $V=(V_1^T, V_2^T)^T$, and 

\[
M=
\begin{bmatrix}
M_{11} & M_{12}\\
M_{21} & M_{22}\\
\end{bmatrix}
.
\]

With the same conditions as in Theorem \ref{th:penest}, $\forall \epsilon > 0$, $\exists \lambda_0$ s.t. $P\{\hat\beta_{n1}=\tilde \beta,\hat\beta_{n2}=\mathbf 0\} \geq 1-\epsilon$, where.

\[
\tilde \beta=\left\{
\begin{array}{ll}
M_{11}^{-1}(W_1+\lambda_0\|\beta_0\|_\gamma^\gamma V_1),&\mathscr L_\gamma\text{ penalty};\\
M_{11}^{-1}W_1,&\text{SCAD penalty}.\\
\end{array}
\right.
\]

\end{theorem}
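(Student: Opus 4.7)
The plan is to exploit the fact that Theorem \ref{th:penest} already establishes $\mathbf{u}_n := \sqrt n(\hat\beta_n - \beta_0) \overset{d}\to \mathbf{u}^* := \arg\max D(\mathbf{u})$ (with uniform convergence $D_n \to D$ on compacts, as noted in that proof). The oracle statement therefore reduces to showing that, with probability at least $1-\epsilon$ over the Gaussian randomness in $W$, the unique maximizer $\mathbf{u}^*$ satisfies $\mathbf{u}_2^* = \mathbf{0}$ and $\mathbf{u}_1^*$ coincides with the claimed $\tilde\beta$. The weak convergence then transfers this to $\hat\beta_n$ on the restricted neighborhood $\|\beta-\beta_0\|\leq K/\sqrt n$.

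Next I would partition $\mathbf{u}=(\mathbf{u}_1,\mathbf{u}_2)$, $W=(W_1,W_2)$, etc., and maximize $D$ in two stages. For SCAD, the limit in Theorem \ref{th:penest} carries no penalty term on the non-zero block, so $D(\mathbf{u}_1,\mathbf{0}) = W_1^T\mathbf{u}_1 - \tfrac12\mathbf{u}_1^T M_{11}\mathbf{u}_1$ is uniquely maximized at $\mathbf{u}_1^* = M_{11}^{-1}W_1$. For $\mathscr{L}_\gamma$, the quadratic is perturbed by the linear term $\lambda_0\|\beta_0\|_\gamma^\gamma V_1^T\mathbf{u}_1$ (the $\gamma>1$ shift from $\text{sign}(\beta_{01,j})|\beta_{01,j}|^{\gamma-1}$ is handled analogously), yielding $\mathbf{u}_1^* = M_{11}^{-1}(W_1 + \lambda_0\|\beta_0\|_\gamma^\gamma V_1)$. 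These match the two cases of $\tilde\beta$.

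The main obstacle is verifying that the global maximum truly lies on the subspace $\mathbf{u}_2 = \mathbf{0}$, rather than at some nearby interior point. This is a subgradient/KKT argument: the non-smooth part of $D$ on the zero block is $-\lambda_0 E|\epsilon_1|\|\mathbf{u}_2\|_1$, whose subdifferential at $\mathbf{0}$ is $[-\lambda_0 E|\epsilon_1|,\lambda_0 E|\epsilon_1|]$ componentwise. Plugging $\mathbf{u}_1 = \tilde\beta$ into $\nabla_{\mathbf{u}_2} D$ and imposing inclusion of $0$ in the subdifferential yields, for each $j$ in the zero block,
\[
\bigl|W_{2,j} - (M_{21}\tilde\beta)_j + \lambda_0\|\beta_0\|_\gamma^\gamma V_{2,j}\bigr| \leq \lambda_0 E|\epsilon_1|,
\]
with the $V_2$ term absent in the SCAD case. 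For SCAD the left side is $O_p(1)$ while the right side grows linearly in $\lambda_0$, so any $\lambda_0$ exceeding a Gaussian quantile of $W_2 - M_{21}M_{11}^{-1}W_1$ works; for $\mathscr{L}_\gamma$, both sides scale in $\lambda_0$ and the argument requires the sign-neutrality-type bound $\|\beta_0\|_\gamma^\gamma\,|V_{2,j} - (M_{21}M_{11}^{-1}V_1)_j| < E|\epsilon_1|$ coordinatewise, after which large $\lambda_0$ again suffices. Strict concavity of $D$ in $\mathbf{u}$ then promotes the stationary point to the unique global argmax, closing the argument.
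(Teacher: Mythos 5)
Your proposal follows the same skeleton as the paper's proof: via Theorem \ref{th:penest}, reduce the claim to showing $\arg\max_{\|\mathbf u\|\leq K}D(\mathbf u)=(\tilde\beta^T,\mathbf 0^T)^T$ with probability at least $1-\epsilon$, verify stationarity of $D$ on the nonzero block, and verify a subgradient sign condition on the zero block for large $\lambda_0$. Where you genuinely improve on the paper is the zero-block condition: by profiling out $\mathbf u_1$ you reduce it to $|W_{2,j}-(M_{21}M_{11}^{-1}W_1)_j+\lambda_0\|\beta_0\|_\gamma^\gamma(V_2-M_{21}M_{11}^{-1}V_1)_j|\leq\lambda_0E|\epsilon_1|$, and you correctly observe that for the $\mathscr L_\gamma$ penalty the left-hand side itself scales linearly in $\lambda_0$, because $\tilde u$ depends on $\lambda_0$ through $\tilde\beta=M_{11}^{-1}(W_1+\lambda_0\|\beta_0\|_\gamma^\gamma V_1)$ and because of the $\lambda_0\|\beta_0\|_\gamma^\gamma V_2$ term. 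The paper instead asserts ${[-M\tilde u+W+\lambda_0\|\beta_0\|_\gamma^\gamma V]}_j=O_p(1)$ and takes $\lambda_0$ large, which is not justified as written; your coordinatewise condition $\|\beta_0\|_\gamma^\gamma|V_{2,j}-(M_{21}M_{11}^{-1}V_1)_j|<E|\epsilon_1|$ (trivially satisfied under sign neutrality $V=0$) is exactly the repair that argument needs. For SCAD your computation coincides with the paper's and is complete, since there the threshold $\lambda_0E|\epsilon_1|$ grows while the profiled score $W_2-M_{21}M_{11}^{-1}W_1$ stays $O_p(1)$.

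Two points you fold into ``handled analogously'' are genuine soft spots and should be stated, not glossed. First, your subdifferential computation takes the zero-block penalty in $D$ to be $-\lambda_0E|\epsilon_1|\|\mathbf u_2\|_1$; that holds only for $\gamma=1$ and SCAD. For $0<\gamma<1$ the term is $-\lambda_0E|\epsilon_1|\sum_j|u_j|^\gamma$, whose subdifferential at $0$ is unbounded, so the inclusion is automatic for every $\lambda_0>0$; but then $D$ is not concave, your closing appeal to ``strict concavity'' fails, and you must instead invoke the uniqueness of the maximizer assumed in case 3 of Theorem \ref{th:penest}, as the paper does. Second, for $\gamma>1$ the limit $D$ is differentiable at $\mathbf u_2=\mathbf 0$: the penalty term carries the factor $\text{sign}(\beta_{0j})|\beta_{0j}|^{\gamma-1}=0$ on the zero block (equivalently, the putative barrier $\lambda_0\gamma E|\epsilon_1||u_j|^{\gamma-1}$ vanishes as $u_j\to 0$), so no finite $\lambda_0$ forces $\mathbf u_2^*=\mathbf 0$ and the oracle conclusion cannot hold; moreover for $\gamma\geq 1$ the nonzero-block stationarity picks up the shift $-\gamma\lambda_0E|\epsilon_1|\,\text{sign}(\beta_{0j})|\beta_{0j}|^{\gamma-1}$, which you mention parenthetically and then drop, so the displayed $\tilde\beta$ is exact only for $0<\gamma<1$ (and SCAD). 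These defects are inherited from the paper itself --- its eqn. (\ref{eq:diffsparse}) ignores the same shift and its eqn. (\ref{eq:subdiff}) fails for $\gamma>1$ --- but a complete write-up should restrict the $\mathscr L_\gamma$ claim to $\gamma<1$, or for $\gamma=1$ adjust $\tilde\beta$ to $M_{11}^{-1}(W_1+\lambda_0\|\beta_0\|_1V_1-\lambda_0E|\epsilon_1|\,\text{sign}(\beta_{01}))$, rather than reproduce the stated formula.
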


\begin{proof}
For $\mathscr L_\gamma$ penalty, since $\mathbf u_n\overset{d}\rightarrow \arg\max D(\mathbf u)$, it suffices to show that $\forall \epsilon > 0$, $\exists \lambda_0$ s.t.

\begin{equation}\label{eq:limitoracle}
P\{\arg\max_{\|\mathbf u\|\leq K}D(\mathbf u)=(\tilde \beta^T, \mathbf 0^T)^T\}\geq 1-\epsilon.
\end{equation}

Since the conditions of Theorem \ref{th:penest} already guarantees that maximum is unique, it suffices to check first order conditions. To this end, let $s$ be the dimension of $\beta_{01}$, and $\tilde u=(\tilde \beta^T, \mathbf 0^T)^T$. For $1\leq j\leq s$,

\begin{equation}\label{eq:diffsparse}
\frac{\partial D(\mathbf u)}{\mathbf u}|_{\mathbf u=\tilde u,j}=[-M\tilde u+W+\lambda_0\|\beta_0\|_\gamma^\gamma V]_j=0.
\end{equation}

When $j>s$,

\[
\frac{\partial D(\mathbf u)}{\mathbf u}|_{\mathbf u=\tilde u,j}=\left\{
\begin{array}{lr}
[-M\tilde u+W+\lambda_0\|\beta_0\|_\gamma^\gamma V]_j-\lambda_0\gamma E|\epsilon_1||u_j|^{\gamma-1}, & u_j>0;\\
{[-M\tilde u+W+\lambda_0\|\beta_0\|_\gamma^\gamma V]}_j+\lambda_0\gamma E|\epsilon_1||u_j|^{\gamma-1}, & u_j<0.\\
\end{array}\right.
\]

It can be seen that ${[-M\tilde u+W+\lambda_0\|\beta_0\|_\gamma^\gamma V]}_j=O_p(1)$. Therefore, choose sufficiently large $\lambda_0$, with probability $1-\epsilon$, when $u_j$ is close to 0,

\begin{equation}\label{eq:subdiff}
\frac{\partial D(\mathbf u)}{\mathbf u}|_{\mathbf u=\tilde u,j}=\left\{
\begin{array}{lr}
<0, & u_j>0;\\
>0, & u_j<0.\\
\end{array}\right.
\end{equation}

For SCAD penalty, when $j>s$,

\[
\frac{\partial D(\mathbf u)}{\mathbf u}|_{\mathbf u=\tilde u,j}=\left\{
\begin{array}{lr}
[-M\tilde u+W]_j-\lambda_0E|\epsilon_1|, & u_j>0;\\
{[-M\tilde u+W]}_j+\lambda_0E|\epsilon_1|, & u_j<0.\\
\end{array}\right.
\]

Eqn. (\ref{eq:diffsparse}) and (\ref{eq:subdiff}) together prove  (\ref{eq:limitoracle}) for both penalty functions.

\end{proof}

\subsection{Comparison with Penalized Likelihood Estimation}

It is worth comparing the theoretical results in this section with the widely applied penalized likelihood estimation in the following form:

\[
\tilde Q_n(\mathbf \beta)=\sum_{i=1}^n[y_i \mathbf x_i^T\mathbf \beta-b(\mathbf x_i^T\mathbf \beta)]-np_{\lambda_n}(\beta).
\]

For $\mathscr L_\gamma$ penalty, results in Theorem \ref{th:penest} is almost identical to Theorem 2 and 3 in \cite{knight2000asymptotics}. Following the notation of this paper, \cite{knight2000asymptotics} shows that, under the settings of Theorem \ref{th:penest}, the asymptotic limit of the maximized function is:

\[
\tilde D(\mathbf u)=W\mathbf u-\frac{1}{2}\mathbf u^TM\mathbf u-\left\{
\begin{array}{lr}
\gamma\lambda_0\sum_{j=1}^p\text{sign}(\beta_{0j})|\beta_{0j}|^{\gamma-1}u_j,&\gamma > 1;\\
\lambda_0\sum_{j=1}^p[\text{sign}(\beta_{0j})u_j+|u_j|1\{\beta_{0j}=0\}],&\gamma=1;\\
\lambda_0\sum_{j=1}^p|u_j|^\gamma1\{\beta_{0j}=0\},&0<\gamma<1.\\
\end{array}
\right.
\]

There are two key differences between $\tilde D$ and $D$. The first difference is the multipler for the penalty; $\lambda_0$ in $\tilde D$ is replaced by $\lambda_0E|\epsilon_1|$. This means that, in Generalized FGSM, the effective strength of the penalty is automatically scaled by the noise level $E|\epsilon_1|$. Mathematically, such scaling originates from the fact that the penalty function $p_{\lambda_n}$ is multipled by the noises $\epsilon_i$ in eqn. (\ref{eq:components}). Choosing the penalty level to be proportional to the noise level has been shown to have theoretical advantage (\cite{sun2012scaled}, \cite{dalalyan2017prediction}). In this respect, such automatic scaling is an advantage of Generalized FGSM.

The second difference is that $D$ introduces additional bias through $V$. To explain this quantity, I introduce the definition of \textit{sign neutral}, which means $V=0$. In certain cases, such as for linear regression models, sign neutrality can be achieved quite trivially (see Section \ref{sec:lr}). However, this is non-trivial for general cases when $\epsilon_1$ and $\mathbf x_1$ are not independent, as in the logistic regression case (see Section \ref{sec:logit}). Heuristically, $V=0$ requires that signs of the errors across the sample space even out. When errors are dependent on the covariates, this actually requires that the sampling for $\mathbf x_i$'s to be balanced in some way. If such sampling is not balanced, Generalized FGSM introduces additional bias. Notice, however, whether such bias is beneficial or malicious requires analysis for the generalization bound that is not covered in this paper.

Theorem \ref{th:oracle} is weaker than the oracle results for SCAD penalty in \cite{fan2001variable} in that the estimator in this paper cannot be achieved with probability converging to 1. The oracle result in \cite{fan2001variable} requires $\sqrt n \lambda_n\rightarrow \infty$, which makes $\alpha_n\wedge \tau_n=\Omega(n^{-\frac{1}{2}})$. This would invalidate the results in Theorem \ref{th:convrate} since it would not ensure that $D_{n,4}$ in eqn. (\ref{eq:components}) is of order $o_p(1)$. 

\section{Examples}\label{sec:examples}

This section discusses the applicability of theory in Section \ref{sec:theory} to two popular GLM models: linear regression and logistic regression. The key point of my discussion is analyzing requirements to satisfy eqn. (\ref{eq:nearzero}) and (\ref{eq:sign}). All other conditions for Theorem \ref{th:convrate} can be satisfied with standard assumptions. 

\subsection{Linear Regression}\label{sec:lr}

Consider the linear regression model $y_i=\mathbf x_i^T\beta_0+\epsilon_i$ with i.i.d. Gaussian errors $\epsilon_i\sim N(0,\sigma^2)$. The following theorem shows that conditions eqn. (\ref{eq:nearzero}) and (\ref{eq:sign}) are trivially satisfied.

\begin{theorem}\label{th:lr}
For the linear regression model, suppose $\mathbf x_1, \cdots, \mathbf x_n, \epsilon_1, \cdots, \epsilon_n$ are i.i.d. with $E\mathbf x_1\mathbf x_1^T=M\in \mathbb R^{p\times p}$.

\begin{enumerate}
\item If eqn. (\ref{eq:penrate}) holds, then the conclusions of Theorem \ref{th:convrate} hold.
\item For specific penalty functions, assuming the corresponding rate for $\lambda_n$ as in Theorem \ref{th:penest}. Then the conclusions of Theorem \ref{th:penest} and \ref{th:oracle} hold.
\end{enumerate}

\end{theorem}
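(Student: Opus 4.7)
The plan is to specialize the general framework to the linear regression case and check that the four assumptions of Theorem~\ref{th:convrate} hold; once they do, parts 1 and 2 of Theorem~\ref{th:lr} follow by direct invocation of Theorems~\ref{th:convrate}, \ref{th:penest}, and \ref{th:oracle}. In linear regression the canonical link gives $b(\theta)=\theta^2/2$, so $b''(\theta)\equiv 1$. Hence assumption~4 is trivial, the moment matrix in assumption~1 reduces to $E\mathbf{x}_1\mathbf{x}_1^T=M$ exactly as posited, and assumption~2 is simply the hypothesis on $\lambda_n$. The only non-trivial work is verifying the error-sign conditions (\ref{eq:nearzero}) and (\ref{eq:sign}); in the penalty-specific part I also just need to check the $\lambda_n$-rate hypotheses implied by the chosen $\gamma$ or SCAD setup.

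The core of the argument is a conditioning step. By assumption $\epsilon_1$ is independent of $\mathbf{x}_1$ with Gaussian density bounded by $\phi(0)/\sigma=(2\pi\sigma^2)^{-1/2}$. Fix $\mathbf{u}$ with $\|\mathbf{u}\|=C$, let $V_n=r_n\mathbf{x}_1^T\mathbf{u}$ and condition on $\mathbf{x}_1$. For (\ref{eq:nearzero}), on the event $V_n\geq 0$,
\[
E[\epsilon_1\,1\{0\leq \epsilon_1\leq V_n\}\mid \mathbf{x}_1]=\int_0^{V_n}\!t\,\phi(t/\sigma)/\sigma\,dt\;\leq\;\tfrac{\phi(0)}{2\sigma}V_n^2,
\]
and an identical bound applies on $\{V_n<0\}$ for the second indicator. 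Taking expectation over $\mathbf{x}_1$ yields a bound of order $r_n^2\,E(\mathbf{x}_1^T\mathbf{u})^2=O(r_n^2)=o(r_n)$, confirming (\ref{eq:nearzero}). For (\ref{eq:sign}), condition again and use $P(0\leq \epsilon_1\leq v\mid \mathbf{x}_1)\leq \phi(0)|v|/\sigma$ for either sign of $v$; the inner expression is bounded by $\phi(0)|\mathbf{x}_1^T\mathbf{u}||V_n|/\sigma$, whose expectation is $O(r_n E(\mathbf{x}_1^T\mathbf{u})^2)=O(r_n)=o(1)$. So both conditions hold, and Theorem~\ref{th:convrate} gives part 1 directly.

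For part 2, I would simply observe that the penalty-rate hypothesis (\ref{eq:penrate}) is built into the premises of Theorem~\ref{th:penest} for each of the four penalty families (one checks the elementary algebra $p'_{\lambda_n}(\beta_{0j})=\gamma\lambda_n|\beta_{0j}|^{\gamma-1}=O(\lambda_n)$ and $r_n^{-1}p_{\lambda_n}(r_n u)=\lambda_n r_n^{\gamma-1}|u|^\gamma$, and similarly for SCAD where $p'_{\lambda_n}$ is bounded by $\lambda_n$). Having verified all hypotheses of Theorem~\ref{th:convrate} in the first part, the conclusions of Theorems~\ref{th:penest} and \ref{th:oracle} follow mechanically.

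The only subtlety, and the closest thing to an obstacle, is making sure the conditioning trick genuinely uses Gaussianity only through \emph{boundedness of the density of $\epsilon_1$ in a neighborhood of zero}; this is what reduces both near-zero probabilities and truncated first moments to powers of $V_n$. Everything else is either assumed or a textbook calculation for the exponential family with identity variance function. I do not expect technical difficulty beyond the two display inequalities above.
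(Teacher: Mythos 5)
Your proposal is correct and follows essentially the same route as the paper's own proof: reduce to $b(\theta)=\theta^2/2$ with $b''\equiv 1$, then verify eqn.~(\ref{eq:nearzero}) and (\ref{eq:sign}) by exploiting independence of $\epsilon_1$ and $\mathbf x_1$ together with the bound $\phi(\cdot/\sigma)/\sigma\leq\phi(0)/\sigma$ on the Gaussian density near zero, yielding the same $O(r_n^2\,\mathbf u^TM\mathbf u)$ and $O(r_n\,\mathbf u^TM\mathbf u)$ bounds (your explicit conditioning step is just a cleaner phrasing of the paper's direct computation). Part 2 is likewise handled identically, since the rate verifications for the $\mathscr L_\gamma$ and SCAD penalties are already packaged in Theorem~\ref{th:penest}.
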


\begin{proof}

$E\epsilon_1^2\mathbf x_1\mathbf x_1^T=\sigma^2E\mathbf x_1\mathbf x_1^T=M<\infty$ by the assumption. $b(\theta)=\theta^2/2$ is also 3rd order differentiable. It only remains to verify eqn. (\ref{eq:nearzero}) and (\ref{eq:sign}).

Let $\phi(\cdot)$ and $\Phi(\cdot)$ be the c.d.f. and p.d.f. of the standard Gaussian distribution. Then,

\[
\begin{aligned}
&|E[\epsilon_1(1\{0\leq \epsilon_1\leq r_n\mathbf x_1^T\mathbf u\}-1\{r_n\mathbf x_1^T\mathbf u\leq \epsilon_1\leq 0\})]|\\
\leq&E[|\epsilon_1|(1\{0\leq \epsilon_1\leq r_n\mathbf x_1^T\mathbf u\}+1\{r_n\mathbf x_1^T\mathbf u\leq \epsilon_1\leq 0\})]\\
\leq & E[|\epsilon_1|1\{|\epsilon_1|\leq r_n|\mathbf x_1^T\mathbf u|\}]=E[2\int_0^{r_n|\mathbf x_1^T\mathbf u|}\frac{x}{\sigma}\phi(\frac{x}{\sigma})dx]\\
\leq&E[\int_0^{r_n|\mathbf x_1^T\mathbf u|}\frac{2x}{\sigma}\phi(0)dx]=\frac{\phi(0)}{\sigma}r_n^2E|\mathbf x_1^T\mathbf u|=\frac{\phi(0)}{\sigma}r_n^2u^TMu\\
=&O(r_n^2).\\
\end{aligned}
\]

This proves eqn. (\ref{eq:nearzero}). Similarly,

\[
\begin{aligned}
&|E[\mathbf x_1^T\mathbf u(1\{0\leq \epsilon_1\leq r_n\mathbf x_1^T\mathbf u\}-1\{r_n\mathbf x_1^T\mathbf u\leq \epsilon_1\leq 0\})]|\\
\leq & E[|\mathbf x_1^T\mathbf u|1\{|\epsilon_1|\leq r_n|\mathbf x_1^T\mathbf u|\}]\leq r_n\frac{\phi(0)}{\sigma}E[(\mathbf x_1^T\mathbf u)^2]\\
= &r_n\frac{\phi(0)}{\sigma}\mathbf u^TM\mathbf u=O(r_n).\\
\end{aligned}
\]

Therefore, eqn. (\ref{eq:sign}) is proved.

\end{proof}

\subsection{Logistic Regression}\label{sec:logit}

\begin{theorem}\label{th:logit}
For the logistic regression model, suppose 

\begin{enumerate}
\item $(\mathbf x_i, y_i)$ are i.i.d. observations with $y_i|\mathbf x_i\sim Bin([1+\exp(\mathbf x_i^T\beta_0)]^{-1})$;
\item $E[\|\mathbf x_1\|(1+e^{|\mathbf x_1^T\beta_0|})]<\infty$, $E[\|\mathbf x_1\|^2(1+e^{|\mathbf x_1^T\beta_0|})]<\infty$.
\end{enumerate}

Then:

\begin{enumerate}
\item If eqn. (\ref{eq:penrate}) holds, then the conclusions of Theorem \ref{th:convrate} hold.
\item For specific penalty functions, assuming the corresponding rate for $\lambda_n$ as in Theorem \ref{th:penest}. Then the conclusions of Theorem \ref{th:penest} and \ref{th:oracle} hold.
\end{enumerate}

\end{theorem}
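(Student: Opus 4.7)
The plan is to verify the hypotheses of Theorem \ref{th:convrate} in the logistic setting, after which Theorem \ref{th:penest} and Theorem \ref{th:oracle} follow by the prescribed choices of $\lambda_n$. The structural conditions are immediate: $b(\theta) = \log(1+e^\theta)$ is infinitely differentiable with $b''(\theta) = e^\theta/(1+e^\theta)^2 \leq 1/4$ uniformly bounded, and $M = E[b''(\mathbf x_1^T\beta_0)\mathbf x_1\mathbf x_1^T]$ exists because $b''$ is bounded and $E\|\mathbf x_1\|^2 < \infty$, which is implied by the second listed moment hypothesis. All the real work lies in verifying eqns. (\ref{eq:nearzero}) and (\ref{eq:sign}).

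The crucial observation is that $\epsilon_1 = y_1 - p_1$ is two-point: equal to $1-p_1$ on $\{y_1 = 1\}$ and to $-p_1$ on $\{y_1 = 0\}$, where $p_1 = [1+e^{-\mathbf x_1^T\beta_0}]^{-1}$ is a deterministic function of $\mathbf x_1$. Hence $\{0 \leq \epsilon_1 \leq r_n\mathbf x_1^T\mathbf u\}$ coincides with $\{y_1 = 1,\; 1-p_1 \leq r_n\mathbf x_1^T\mathbf u\}$, and $\{r_n\mathbf x_1^T\mathbf u \leq \epsilon_1 \leq 0\}$ with $\{y_1 = 0,\; p_1 \leq -r_n\mathbf x_1^T\mathbf u\}$; both events force $|\mathbf x_1^T\beta_0|$ to be very large, so they are tail events in $\mathbf x_1$. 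On these events $|\epsilon_1| \leq r_n|\mathbf x_1^T\mathbf u| \leq r_n C\|\mathbf x_1\|$, so both (\ref{eq:nearzero}) and (\ref{eq:sign}) reduce to controlling $E[\|\mathbf x_1\| 1\{1-p_1 \leq r_n C\|\mathbf x_1\|\}]$ together with the symmetric quantity for $p_1$.

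The main estimate I would prove is that each of these tail expectations is $O(r_n)$. On $\{1-p_1 \leq r_n C\|\mathbf x_1\|\}$, inverting $1-p_1 = (1+e^{\mathbf x_1^T\beta_0})^{-1}$ gives $e^{\mathbf x_1^T\beta_0} \geq (2 r_n C\|\mathbf x_1\|)^{-1}$ whenever $r_n C\|\mathbf x_1\| \leq 1/2$, which rearranges to $\|\mathbf x_1\| \leq 2 r_n C \|\mathbf x_1\|^2 e^{\mathbf x_1^T\beta_0}$; the complementary regime $\|\mathbf x_1\| > 1/(2 r_n C)$ is handled by Markov's inequality using $E\|\mathbf x_1\|^2 < \infty$. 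Taking expectations and invoking $E[\|\mathbf x_1\|^2(1+e^{|\mathbf x_1^T\beta_0|})] < \infty$ yields the $O(r_n)$ bound, and the argument for $\{p_1 \leq r_n C\|\mathbf x_1\|\}$ is symmetric. Plugging back gives (\ref{eq:nearzero}) at rate $O(r_n^2) = o(r_n)$ and (\ref{eq:sign}) at rate $O(r_n) = o(1)$, with uniformity over $\|\mathbf u\| = C$ automatic since the tail events are monotone in $\|\mathbf u\|$ via $|\mathbf x_1^T\mathbf u| \leq C\|\mathbf x_1\|$. The main obstacle is precisely this tail bound: after inverting the sigmoid, the factor $\|\mathbf x_1\|^2 e^{|\mathbf x_1^T\beta_0|}$ appears under the expectation, which is exactly why the theorem hypothesizes the exponential moment condition rather than a polynomial one.
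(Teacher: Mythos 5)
Your proposal is correct, and its skeleton matches the paper's proof: verify the structural conditions directly (boundedness of $b''$, existence of $M$), then condition on $\mathbf x_1$, exploit the two-point conditional law of $\epsilon_1$ to turn $\{|\epsilon_1|\le r_n|\mathbf x_1^T\mathbf u|\}$ into the sigmoid tail event $\{(1+e^{|\mathbf x_1^T\beta_0|})^{-1}\le r_n|\mathbf x_1^T\mathbf u|\}$, and trade that indicator against the exponential-moment hypotheses. (Your convention $p_1=[1+e^{-\mathbf x_1^T\beta_0}]^{-1}$ flips the sign in the theorem statement, but as in the paper everything is symmetrized through $p_1\wedge(1-p_1)=[1+e^{|\mathbf x_1^T\beta_0|}]^{-1}$, so this is immaterial.) Two executional differences are worth recording. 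First, for eqn.~(\ref{eq:nearzero}) you retain the factor $|\epsilon_1|\le r_nC\|\mathbf x_1\|$ before estimating the tail event, obtaining $O(r_n^2)$; the paper instead bounds the conditional weight by $p(1-p)\le\frac{1}{4}$ and applies Markov's inequality with the first-moment condition, so its displayed bound $\frac{r_n\|\mathbf u\|}{2}E[\|\mathbf x_1\|(1+e^{|\mathbf x_1^T\beta_0|})]$ is only $O(r_n)$, and meets the required $o(r_n)$ only via the standard refinement $P\{Z\ge a\}\le a^{-1}E[Z1\{Z\ge a\}]=o(a^{-1})$ for integrable $Z$. Your route is tighter on exactly this point, and as a by-product shows the first hypothesis $E[\|\mathbf x_1\|(1+e^{|\mathbf x_1^T\beta_0|})]<\infty$ is redundant, being implied by the second one after splitting on $\{\|\mathbf x_1\|\le 1\}$. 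Second, your inversion of the sigmoid via the case split $r_nC\|\mathbf x_1\|\le\frac{1}{2}$, with a separate Markov bound on $\{\|\mathbf x_1\|>(2r_nC)^{-1}\}$ using $E\|\mathbf x_1\|^2<\infty$, is a slightly more laborious equivalent of the paper's one-line inequality $1\{(1+e^{|\mathbf x_1^T\beta_0|})^{-1}\le t\}\le t(1+e^{|\mathbf x_1^T\beta_0|})$ for $t\ge 0$, which applied with $t=r_n|\mathbf x_1^T\mathbf u|$ yields the paper's bound $2r_n\|\mathbf u\|^2E[\|\mathbf x_1\|^2(1+e^{|\mathbf x_1^T\beta_0|})]$ for eqn.~(\ref{eq:sign}) directly. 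With eqns.~(\ref{eq:nearzero}) and (\ref{eq:sign}) verified, the passage to the conclusions of Theorems \ref{th:convrate}, \ref{th:penest} and \ref{th:oracle} proceeds exactly as you indicate.
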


\begin{proof}

For logistic regression, the canonical link function is $b(\theta)=\log(1+e^\theta)$, which is 3rd order differentiable, and $b''(\theta)\leq \frac{1}{4}$. Also, $\forall \mathbf u$, $\mathbf u^TEb''(\mathbf x_1^T\beta_0)\mathbf x_1\mathbf x_1^T\mathbf u=Eb''(\mathbf x_1^T\beta_0)(\mathbf x_1^T\mathbf u)^2\leq \frac{1}{4} E(\mathbf x_1^T\mathbf u)^2\leq \frac{1}{4}\|\mathbf u\|^2E\|\mathbf x_1\|^2<\frac{1}{4}\|\mathbf u\|^2E[\|\mathbf x_1\|^2(1+e^{|\mathbf x_1^T\beta_0|})<\infty$. This shows that $E[b''(\mathbf x_1^T\beta_0)\mathbf x_1\mathbf x_1^T]=M$ exists. It remains to prove eqn. (\ref{eq:nearzero}) and (\ref{eq:sign}).

Denote $p=\frac{1}{1+e^{-\theta}}$.

\[
\begin{aligned}
&|E[\epsilon_1(1\{0\leq \epsilon_1\leq r_n\mathbf x_1^T\mathbf u\}-1\{r_n\mathbf x_1^T\mathbf u\leq \epsilon_1\leq 0\})]|\\
\leq & E[|\epsilon_1|1\{|\epsilon_1|\leq r_n|\mathbf x_1^T\mathbf u|\}]\\
=&E[p(1-p)\cdot 1\{\frac{1}{1+e^{\mathbf x_1^T\beta_0}}\leq r_n|\mathbf x_1^T\mathbf u|\}+(1-p)p\cdot 1\{\frac{1}{1+e^{-\mathbf x_1^T\beta_0}}\leq r_n|\mathbf x_1^T\mathbf u|\}]\\
\leq &\frac{1}{4}E[1\{\frac{1}{1+e^{\mathbf x_1^T\beta_0}}\leq r_n|\mathbf x_1^T\mathbf u|\}+1\{\frac{1}{1+e^{-\mathbf x_1^T\beta_0}}\leq r_n|\mathbf x_1^T\mathbf u|\}]\\
\leq &\frac{1}{2}P\{\frac{1}{1+e^{\mathbf x_1^T\beta_0}}\wedge \frac{1}{1+e^{-\mathbf x_1^T\beta_0}}\leq r_n|\mathbf x_1^T\mathbf u|\}=\frac{1}{2}P\{\frac{1}{1+e^{|\mathbf x_1^T\beta_0|}}\leq r_n|\mathbf x_1^T\mathbf u|\}\\
\leq&\frac{1}{2}P\{\|\mathbf x_1\|(1+e^{|\mathbf x_1^T\beta_0|})\geq\frac{1}{r_n\|\mathbf u\|}\}\leq\frac{r_n\|\mathbf u\|}{2}E[\|\mathbf x_1\|(1+e^{|\mathbf x_1^T\beta_0|})].\\
\end{aligned}
\]

This proves eqn. (\ref{eq:nearzero}).

\[
\begin{aligned}
&|E[\mathbf x_1^T\mathbf u(1\{0\leq \epsilon_1\leq r_n\mathbf x_1^T\mathbf u\}-1\{r_n\mathbf x_1^T\mathbf u\leq \epsilon_1\leq 0\})]|\\
\leq & E[|\mathbf x_1^T\mathbf u|1\{|\epsilon_1|\leq r_n|\mathbf x_1^T\mathbf u|\}]\\
=&E\{|\mathbf x_1^T\mathbf u|[p1\{1-p\leq r_n|\mathbf x_1^T\mathbf u|\}+(1-p)1\{p\leq r_n|\mathbf x_1^T\mathbf u|\}]\}\\
\leq&2E(|\mathbf x_1^T\mathbf u|1\{p\wedge(1-p)\leq r_n|\mathbf x_1^T\mathbf u|\})=2E(|\mathbf x_1^T\mathbf u|1\{\frac{1}{1+e^{|\mathbf x_1^T\beta_0|}}\leq r_n|\mathbf x_1^T\mathbf u|\})\\
\leq &2r_nE(\mathbf x_1^T\mathbf u)^2(1+e^{|\mathbf x_1^T\beta_0|})\leq 2r_n\|u\|^2E[\|\mathbf x_1\|^2(1+e^{|\mathbf x_1^T\beta_0|})]=O(r_n).\\
\end{aligned}
\]

This proves eqn. (\ref{eq:sign}).

\end{proof}

It is worth noting that the second condition of Theorem \ref{th:logit} requires strong bound for all moments of $\mathbf x_1$, which suggests that the distribution of $\mathbf x_1$ need be light tailed. Such a strong condition is not needed for linear regression models as in Theorem \ref{th:lr}, and is indeed an artefact from the dependence between $\epsilon_1$ and $\mathbf x_1$. Nevertheless, a sufficient condition for the distribution of $\mathbf x_1$ is presented in Proposition \ref{pp:tail}. This condition requires that the distribution of the norm $\|\mathbf x_1\|$ has exponential tails, which is reasonable since it can be satisfied by a large number of multi-variate distributions such as Gaussian distribution, exponential distribution and any distribution with bounded support.

\begin{proposition}\label{pp:tail}
Denote the p.d.f. of $\mathbf x_1$ by $f_{\mathbf x_1}(\mathbf x)$ w.r.t. Lebesgue measure $\nu$ on $\mathbb R^p$. If $\exists C>0, A>\|\beta_0\|$ s.t. $f_{\mathbf x_1}(\mathbf x)\leq Ce^{-A\|\mathbf x\|}$. Then $E[\|\mathbf x_1\|^n(1+e^{|\mathbf x_1^T\beta_0|})]<\infty$ for all $n>0$.
\end{proposition}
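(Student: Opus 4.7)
The plan is to reduce the multivariate expectation to a one-dimensional radial integral that decays exponentially. First I would apply Cauchy--Schwarz to the inner product in the exponent, writing
\[
e^{|\mathbf x^T\beta_0|} \leq e^{\|\beta_0\|\,\|\mathbf x\|},
\]
so that the integrand of interest is controlled by $\|\mathbf x\|^n (1 + e^{\|\beta_0\|\,\|\mathbf x\|})$, which depends on $\mathbf x$ only through $\|\mathbf x\|$.

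Next I would split the expectation into two pieces and invoke the tail bound on $f_{\mathbf x_1}$:
\[
E[\|\mathbf x_1\|^n(1 + e^{|\mathbf x_1^T\beta_0|})] \leq C\int_{\mathbb R^p}\|\mathbf x\|^n e^{-A\|\mathbf x\|}\,d\nu + C\int_{\mathbb R^p}\|\mathbf x\|^n e^{-(A-\|\beta_0\|)\|\mathbf x\|}\,d\nu.
\]
By hypothesis $A > \|\beta_0\|$, so both exponential rates $A$ and $A - \|\beta_0\|$ are strictly positive.

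I would then switch to spherical coordinates in $\mathbb R^p$. Writing $d\nu = r^{p-1}\,dr\,d\omega$ with $d\omega$ the surface measure on the unit sphere $S^{p-1}$, each of the two integrals above reduces (up to the finite constant $|S^{p-1}|$) to
\[
\int_0^\infty r^{n+p-1} e^{-\alpha r}\,dr = \alpha^{-(n+p)}\,\Gamma(n+p) < \infty
\]
with $\alpha$ equal to $A$ or $A - \|\beta_0\|$ respectively. Summing the two finite contributions gives the claim.

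There is no real obstacle here; the only subtlety is that the Cauchy--Schwarz step inflates the effective decay rate by $\|\beta_0\|$, which is precisely why the hypothesis is phrased as $A > \|\beta_0\|$ rather than merely $A > 0$. If one wanted a sharper condition, one would have to exploit cancellation in $\mathbf x^T\beta_0$ rather than the worst-case bound, but for the present proposition the crude inequality suffices.
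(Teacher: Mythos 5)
Your proposal is correct and follows essentially the same route as the paper's proof: bound $e^{|\mathbf x^T\beta_0|}\leq e^{\|\beta_0\|\|\mathbf x\|}$ via Cauchy--Schwarz, apply the density tail bound, and reduce to radial integrals of the form $\int_0^\infty r^{n+p-1}e^{-\alpha r}\,dr<\infty$ with $\alpha\in\{A,\,A-\|\beta_0\|\}$ both positive. Incidentally, your spherical-coordinate Jacobian $r^{p-1}$ is the correct one (the paper's displayed factor $t^p$ with constant $2\pi^{(p+1)/2}/\Gamma(\frac{p+1}{2})$ is a harmless slip that does not affect finiteness).
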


\begin{proof}

By change of integration variable $t=\|\mathbf x\|$, we have

\[
\begin{aligned}
&E[\|\mathbf x_1\|^n(1+e^{|\mathbf x_1^T\beta_0})]\leq E[\|\mathbf x_1\|^n(1+e^{\|\mathbf x_1\|\|\beta_0\|})]\\
\leq &C\int_{\mathbb R^p}\|\mathbf x\|^n(1+e^{\|\mathbf x\|\|\beta_0\|})\cdot e^{-A\|\mathbf x\|}d\nu(\mathbf x)\\
=&C\int_0^\infty t^{n}(e^{-At}+e^{-(A-\|\beta_0\|)t})\cdot \frac{2\pi^{\frac{p+1}{2}}t^p}{\Gamma(\frac{p+1}{2})}dt<\infty.
\end{aligned}
\]

\end{proof}

\section{Conclusions and Discussion}\label{sec:disc}

FGSM is a popular modern technique in the area of adversarial examples learning. Despite its empirical success, its theoretical property is not well studied. In this paper, I have introduced the Generalized FGSM method that extends the relationship between FGSM and LASSO regression. Applying this method in the GLM framework, which is also the 1-layer neural network framework with certain activation functions, I have developed asymptotic theory that shows Generalized FGSM achieves $\sqrt n$-consistency, sparsity, and is weakly oracle.

The developed asymptotic theory is also analogous to penalized likelihood estimation methods. Comparing to those methods, Generalized FGSM has the advantage that the penalty multipler is automatically scaled by the noise level. It also introduces additional bias if the sampling distribution is not sign neutral. For logistic regression models, Generalized FGSM performs the best when covariate sampling has light tails. For deeper neural network models, I conjecture that sampling balanceness is important for the optimal performance of FGSM. Validation of this statement can be of future work.

The objective of this paper is to develop theory in simple neural network settings that may bring theoretical justification for FGSM. For this purpose, I have not developed algorithms for Generalized FGSM estimation. Nevertheless, the class of penalty functions in this method induces a number of ways to generate adversarial examples (eqn. (\ref{eq:perturb})), making exsiting adversarial examples learning algorithms readily applicable. Evaluating the performance of these new adversarial example generating schemes can be future empirical study topics.

\bibliography{lasso_fgsm}
\end{document}